\DeclareMathOperator*{\argmax}{arg\,max}
\newtheorem{theorem}{Theorem}[section]
\newtheorem{lemma}[theorem]{Lemma}
\newcolumntype{L}[1]{>{\raggedright\let\newline\\\arraybackslash\hspace{0pt}}m{#1}}
\newcolumntype{C}[1]{>{\centering\let\\}m{#1}}
\newcolumntype{R}[1]{>{\raggedleft\let\newline\\\arraybackslash\hspace{0pt}}m{#1}}
\icmltitlerunning{Anomaly Detection With Multiple-Hypotheses Predictions}
\begin{document}

\twocolumn[
\icmltitle{Anomaly Detection With Multiple-Hypotheses Predictions}

\icmlsetsymbol{equal}{*}

\begin{icmlauthorlist}
\icmlauthor{Duc Tam Nguyen}{Freiburg,Bo}
\icmlauthor{Zhongyu Lou}{Bo}
\icmlauthor{Michael Klar}{Bo}
\icmlauthor{Thomas Brox}{Freiburg}
\end{icmlauthorlist}

\icmlaffiliation{Bo}{Corporate Research, Robert Bosch GmbH, Renningen, Germany}
\icmlaffiliation{Freiburg}{Computer Vision Group, University of Freiburg, Freiburg, Germany }

\icmlcorrespondingauthor{Duc Tam Nguyen}{Nguyen@informatik.uni-freiburg.de}
\icmlcorrespondingauthor{Zhongyu Lou}{Zhongyu.Lou@de.bosch.com}
\icmlcorrespondingauthor{Michael Klar}{Michael.Klar2@de.bosch.com}
\icmlcorrespondingauthor{Thomas Brox}{Brox@informatik.uni-freiburg.de}

\icmlkeywords{Machine Learning, ICML}

\vskip 0.3in
]

\printAffiliationsAndNotice{}  %

\begin{abstract}
In one-class-learning tasks, only the normal case (foreground) can be modeled with data, whereas the variation of all possible anomalies is too erratic to be described by samples. 
Thus, due to the lack of representative data, the wide-spread discriminative approaches cannot cover such learning tasks, and rather generative models,
which attempt to learn the input density of the foreground, are used. 
However, generative models suffer from a large input dimensionality (as in images) and are typically inefficient learners.
We propose to learn the data distribution of the foreground more efficiently with a \emph{multi-hypotheses autoencoder}. Moreover, the model is criticized by a \emph{discriminator}, which prevents artificial data modes not supported by data, and enforces diversity across hypotheses. 
Our multiple-hypotheses-based anomaly detection framework allows the reliable identification of out-of-distribution samples. For anomaly detection on CIFAR-10, it yields up to 3.9\% points improvement over previously reported results. On a real anomaly detection task, the approach reduces the error of the baseline models from 6.8\% to 1.5\%.
\end{abstract}
\section{Introduction}
Anomaly detection classifies a sample as normal or abnormal. In many applications, however, it must be treated as a one-class-learning problem, since the abnormal class cannot be defined sufficiently by samples. Samples of the abnormal class can be extremely rare, or they do not cover the full space of possible anomalies. For instance, in an autonomous driving system, we may have a test case with a bear or a kangaroo on the road. For defect detection in manufacturing, new, unknown production anomalies due to critical changes in the production environment can appear. In medical data analysis, there can be unknown deviations from the healthy state. In all these cases, the well-studied discriminative models, where decision boundaries of classifiers are learned from training samples of all classes, cannot be applied. The decision boundary learning of discriminative models will be dominated by the normal class, which will negatively influence the classification performance.

Anomaly detection as one-class learning is typically approached by generative, reconstruction-based methods \citep{zong2018deep}. 
They approximate the input distribution of the normal cases by parametric models, which allow them to reconstruct input samples from this distribution. 
At test time, the data negative log-likelihood serves as an anomaly-score. 
In the case of high-dimensional inputs, such as images, learning a representative distribution model of the normal class is hard and requires many samples.

Autoencoder-based approaches, such as the variational autoencoder \citep{rezende2014stochastic,kingma2013auto}, mitigate the problem by learning a mapping to a lower-dimensional representation, where the actual distribution is modeled. In principle, the nonlinear mappings in the encoder and decoder allow the model to cover multi-modal distributions in the input space.  However, in practice, autoencoders tend to yield blurry reconstructions, since they regress mostly the conditional mean rather than the actual multi-modal distribution (see Fig. \ref{fig:Teaser:Metal_anomaly} for an example on a metal anomaly dataset). 
Due to multiple modes in the actual distribution, the approximation with the mean predicts high probabilities in areas not supported by samples. The blurry reconstructions in Fig. \ref{fig:Teaser:Metal_anomaly} should have a low probability and be classified as anomalies, but instead they have the highest likelihood under the learned autoencoder. This is fatal for anomaly detection. 

\begin{figure*}
\centering
        \begin{subfigure}{.15\linewidth}
            \centering
      \includegraphics[width=.99\linewidth]{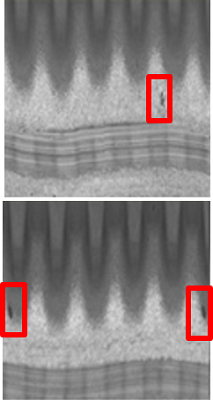}
      \caption{Test images}
      \label{fig:teaser-sfig1}
    \end{subfigure}%
    \begin{subfigure}{.32\textwidth}
      \centering
      \includegraphics[width=.95\linewidth]{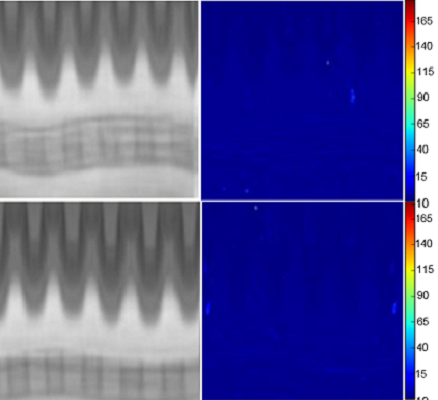}
      \caption{Autonencoder reconstructions}
      \label{fig:teaser-sfig2}
    \end{subfigure}
    \begin{subfigure}{.32\textwidth}
      \includegraphics[width=.95\linewidth]{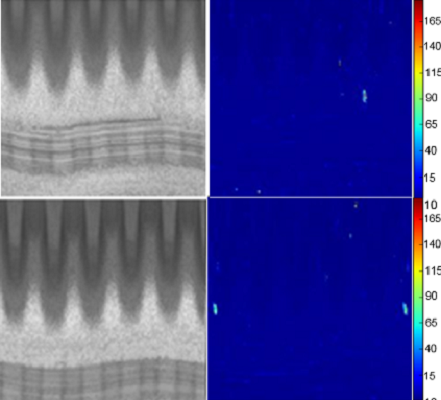}
      \caption{ConAD reconstructions}
      \label{fig:sfig3}
    \end{subfigure}
    \caption{Detection of anomalies on a Metal Anomaly dataset. (a) Test images showing anomalies (black spots). (b) An Autoencoder-based approach produces blurry reconstructions to express model uncertainty. The blurriness falsifies reconstruction errors (and hence anomaly scores)(c) Our model: Consistency-based anomaly detection (ConAD) gives the network more expressive power with a multi-headed decoder (also known as multiple-hypotheses networks). The resulting anomaly scores are hence much clearer in our framework ConAD. }
    \label{fig:Teaser:Metal_anomaly}
\end{figure*}

\begin{figure*}[h]
\centering
     \begin{subfigure}{.19\textwidth}
      \includegraphics[width=1.05\linewidth]{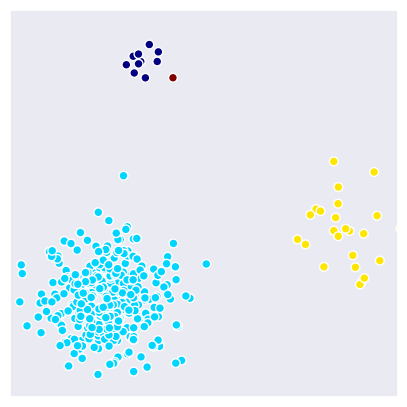}
      \caption{Cond. space}
      \label{fig:sfig1LOF}
    \end{subfigure}%
    \begin{subfigure}{.19\textwidth}
      \includegraphics[width=1.05\linewidth]{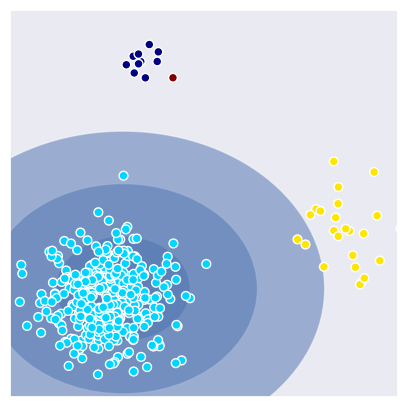}
      \caption{Autoencoder}
      \label{fig:sfig2LOF}
    \end{subfigure}
    \begin{subfigure}{.19\textwidth}
      \includegraphics[width=1.05\linewidth]{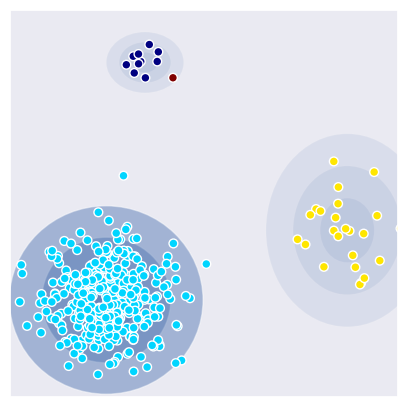}
      \caption{MDN}
      \label{fig:sfig3LOF}
    \end{subfigure}
    \begin{subfigure}{.19\textwidth}
      \includegraphics[width=1.05\linewidth]{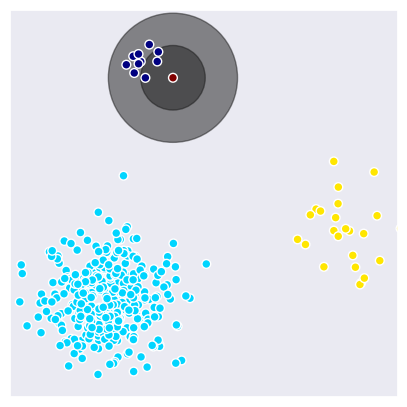}
      \caption{LOF}
      \label{fig:sfig4LOF}
    \end{subfigure}
        \begin{subfigure}{.19\textwidth}
      \includegraphics[width=1.05\linewidth]{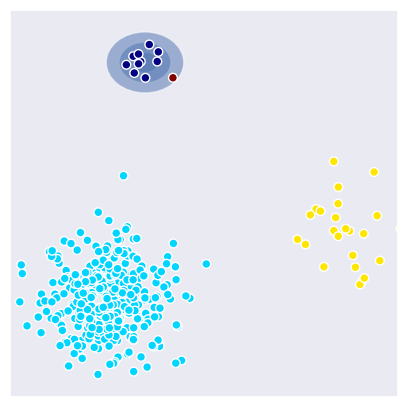}
       \caption{Our model}
      \label{fig:sfig5LOF}
    \end{subfigure}
    \caption{Illustration of the different anomaly detection strategies. (a) In this example, two dimensions with details that are hard to capture in the conditional space are shown. The red dot is a new point.  Dark blue indicates high likelihood, black indicates the neighborhood considered. 
    The autoencoder  (b) cannot deal with the multi-modal distribution. The mixture density network (c) in principle can do so, but recognition of the sample as a normal case is very brittle and will fail in case of mode collapse. Local-Outlier-Factor (d) makes a decision based on the data samples closest to the input sample. Our model (e) learns multiple local distributions and uses the data likelihood of the closest one as the anomaly score. } 
    \label{fig:MDN-LOF-MHP}
\end{figure*}

Alternatively, mixture density networks  \citep{bishop1994mixture} learn a conditional Gaussian \emph{mixture distribution}. They directly estimate local densities that are coupled to a global density estimate via mixing coefficients. Anomaly scores for new points can be estimated using the data likelihood (see Appendix).
However, global, multi-modal distribution estimation is a hard learning problem with many problems in practice. In particular, mixture density networks tend to suffer from mode collapse in high-dimensional data spaces, i.e., the relevant data modes needed to distinguish rare but normal data from anomalies will be missed. 

Simple nearest neighbor analysis, such as the Local-outlier-factor \cite{breunig2000lof}, operates in image-space directly without training. While this is a simple and sometimes effective baseline, such local analysis is inefficient in very high-dimensional spaces and is slow at test time.
Fig. \ref{fig:MDN-LOF-MHP} illustrates these different strategies in a simple, two-dimensional example. 

In this work, we propose the use of multiple-hypotheses networks \citep{rupprecht_learning_2016,koltun_multi_choise, ilg_uncertainty_2018, bhattacharyya2018accurate} for anomaly detection to provide a more fine-grained description of the data distribution than with a single-headed network. 
In conjunction with a variational autoencoder, the multiple hypotheses can be realized with a multi-headed decoder. Concretely, \emph{each network head} may predict a Gaussian density estimate.
Hypotheses form clusters in the data space and can capture model uncertainty not encoded by the latent code. 

Multiple-hypotheses networks have not yet been applied to anomaly detection due to several difficulties in training these networks to produce a multi-modal distribution consistent with the training distribution. 
The loosely coupled hypotheses branches are typically learned with a winner-takes-all loss, where all learning signal is transferred to one single best branch. 
Hence, bad hypotheses branches are not penalized and may support non-existing data regions. These artificial data modes cannot be distinguished from normal data.
This is an undesired property for anomaly detection and becomes more severe with an increasing number of hypotheses.

We mitigate the problem of artificial data modes by combining multiple-hypotheses learning with a discriminator D as a critic. The discriminator ensures the consistency of estimated data modes with the real data distribution. Fig. \ref{fig:training and testing}  shows the scheme of the framework.

This approach combines ideas from all three previous paradigms: the latent code of a variational autoencoder yields a way to efficiently realize a generative model that can act in a rather low-dimensional space; the multiple hypotheses are related to the mixture density of mixture density networks, yet without the global component, which leads to mode collapse. 

We evaluate the anomaly detection performance of our approach on CIFAR-10 and a real anomaly image dataset, the \emph{Metal Anomaly dataset} with images showing a structured metal surface, where anomalies in the form of scratches, dents or texture differences are to be detected.
We show that anomaly detection performance with multiple-hypotheses networks is significantly better compared to single-hypotheses networks. On CIFAR-10, our proposed ConAD framework (consistency-based anomaly detection) improves on previously published results. 
Furthermore, we show a large performance gap between ConAD and mixture density networks. 
This indicates that anomaly score estimation based on the global neighborhood (or data likelihood) is inferior to local neighborhood consideration.

\section{Anomaly detection with multi-hypotheses variational autoencoders}%

\subsection{Training and testing for anomaly detection}

\begin{figure}[t]
\centering
    \begin{subfigure}{.43\textwidth}
      \includegraphics[width=0.99\linewidth]{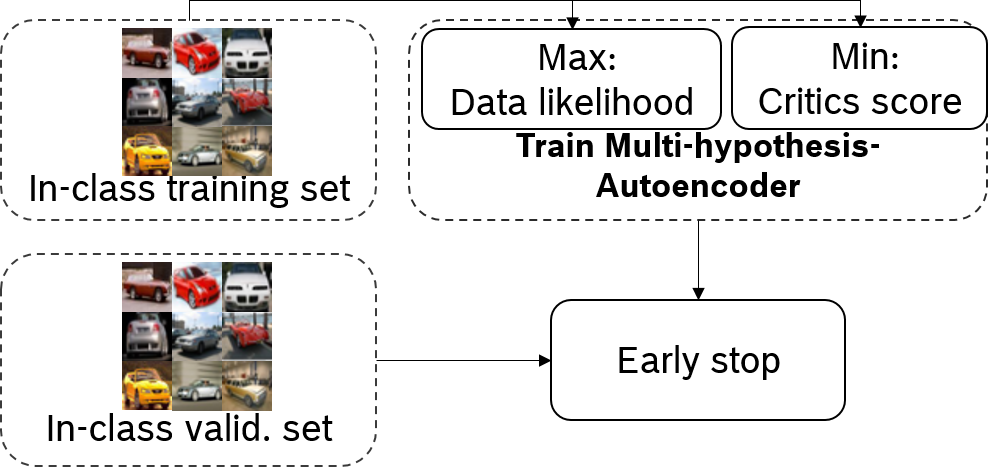}
      \caption{}
      \label{fig:training}
      \end{subfigure}

    \begin{subfigure}{.43\textwidth}
      \includegraphics[width=0.99\linewidth]{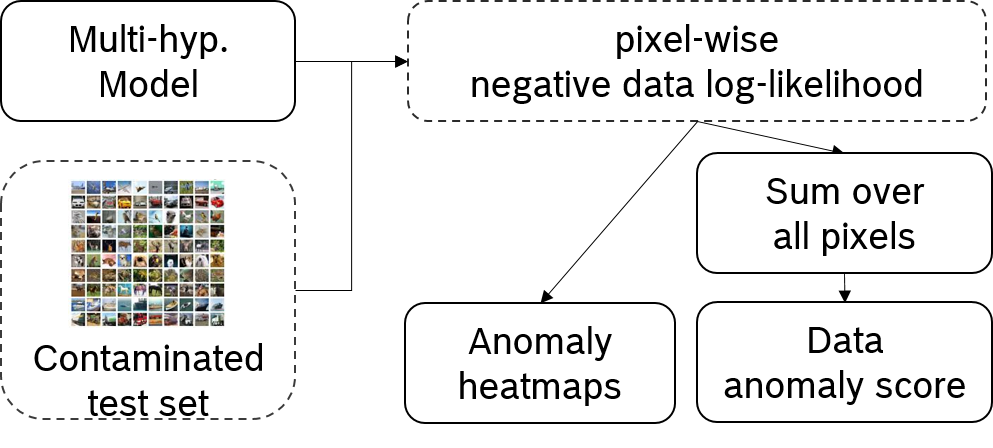}
      \caption{ }          
      \label{fig:testing}
    \end{subfigure}
    \caption{Training and testing overview of the proposed anomaly detection framework. (a) shows training the model to capture the normal data distribution. For the distribution learning, we use a multiple-hypotheses variational autoencoder (Fig. \ref{fig:Multiple-hyp-VAE}) with discriminator training (Fig. \ref{fig:discriminator training}). During training, only data from the normal case are used. 
    (b) At test time, the data likelihood is used for detecting anomalies. A low likelihood indicates an out-of-distribution sample, i.e., an anomaly. }
    \label{fig:training and testing}
\end{figure}

    Fig. \ref{fig:training and testing} shows the training and testing within our framework. The multiple-hypothesis variational autoencoder (Fig. \ref{fig:Multiple-hyp-VAE}) uses the data from the normal case for distribution learning. The learning is performed with the maximum likelihood and critics minimizing objectives (Fig. \ref{fig:discriminator training}). 

    At test time (Fig \ref{fig:testing}), the test set is contaminated with samples from other classes (anomalies). For each sample, the data negative log-likelihood under the learned multi-hypothesis model is used as an anomaly score. The discriminator only acts as a critic during training and is not required at test time.
\subsection{Multiple-hypotheses variational autoencoder}
\label{section: model archiitecture}
For fine-grained data description, we learn a distribution with a multiple-hypotheses autoencoder. 
Figure \ref{fig:Multiple-hyp-VAE} shows our multiple-hypotheses variational autoencoder. The last layer (head) of the decoder is split into $H$ branches to provide $H$ different hypotheses. The outputs of each branch are the parameters of an independent Gaussian for each pixel.

\begin{figure}[t]
\centering
    \includegraphics[width=0.80\linewidth]{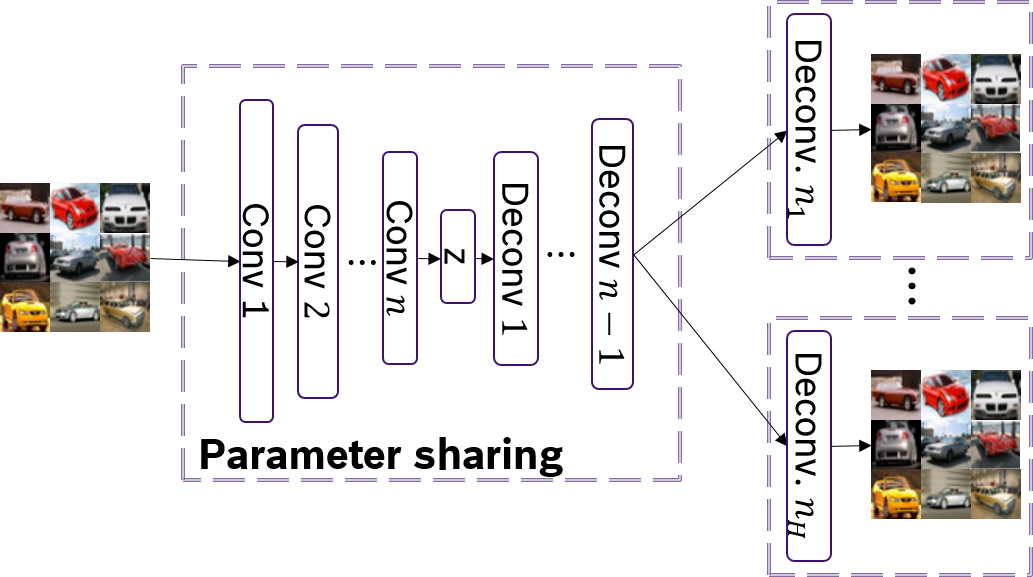}
    \caption{Multi-headed variational autoencoder. All heads share the same encoder, the same latent code, and large parts of the decoder, but the last layers create different hypotheses.}
    \label{fig:Multiple-hyp-VAE}
\end{figure}

    In the basic training procedure without discriminator training, the multiple-hypotheses autoencoder is trained with the winner-takes-all (WTA) loss:

        \begin{dmath}
        L_{WTA}(x_i|\theta_h) =  E_{z_k\sim q_\phi(z|x)}\left[\log p_{\theta_h}(x_i|z_k)\right]  
        \\ \textbf{ s.t. }  h = \argmax_j E_{z_k\sim q_\phi(z|x)}\left[\log p_{\theta_j}(x_i|z_k)\right],  
        \label{eq:WTA-simple}
        \end{dmath}
        
whereby $\theta_j$  is the parameter set of hypothesis branch $j$, $\theta_h$ the best hypothesis w.r.t. the data likelihood of sample $x_i$, $z_k$ is the noise and $q_\phi$ the distribution after the encoder. Only the network head with the best-matching hypothesis concerning the training sample receives the learning signal. %

\subsection{Training with discriminator as a critic}
\label{sec:discrim training}
\begin{figure}[t]
\centering
      \includegraphics[width=0.99\linewidth]{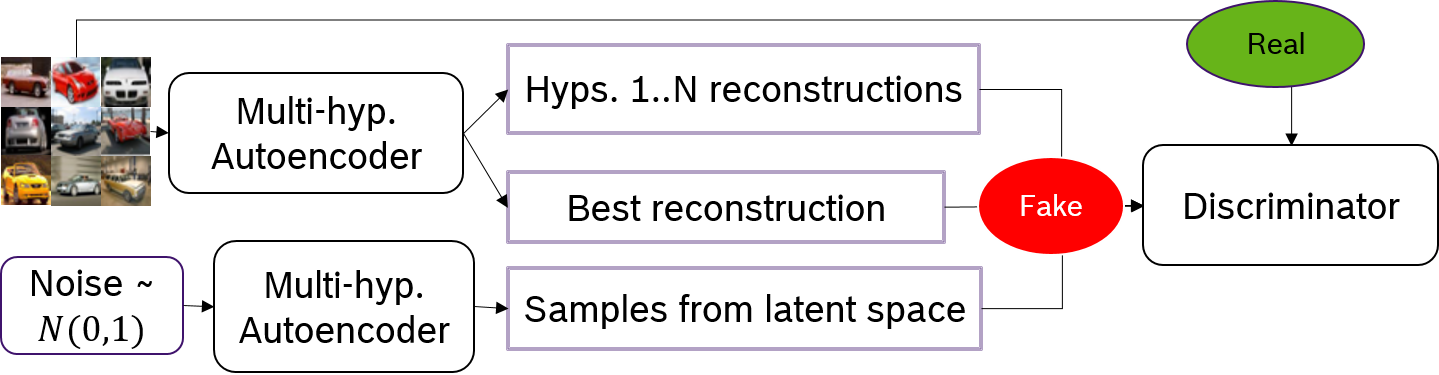}
    \caption{Discriminator training in the context of the multiple-hypotheses autoencoder. As in usual discriminator training, an image from the training set and a randomly sampled image are labeled as real and fake respectively. Additional fake samples are generated by the autoencoder.} 
    \label{fig:discriminator training}
\end{figure}

When learning with the winner-takes-all loss,  the non-optimal hypotheses are not penalized. Thus, they can support any artificial data regions without being informed via the learning signal; for a more formal discussion see the Appendix. We refer to this problem as the inconsistency of the model regarding the real underlying data distribution.

As a new alternative, we propose adding a discriminator D as a critic when training the multiple-hypotheses autoencoder G; see Fig. \ref{fig:discriminator training}. 
D and G are optimized together on the minimax loss 
    \begin{dmath}
    \min_D \max_G  L_{D}(x,z) = \min_D \max_G  \underbrace{- \log(p_D(x_{real}))}_{L_{real}} + L_{fake}(x,z)
    \label{eq:LD}
    \end{dmath}
        \begin{dmath}
\text{with } {L_{fake}(x,z) =   \log(p_D(\hat{x}_{z\sim \mathcal{N}(0,1)}))}  +  {\log(p_D(\hat{x}_{z\sim \mathcal{N}(\mu_{z|x},\Sigma_{z|x})}))  + \log(p_D(\hat{x}_{best-guess}))}
    \label{eq:LFake}
    \end{dmath}
Figure \ref{fig:discriminator training} illustrates how samples are fed into the discriminator.  In contrast to a standard GAN, samples labeled as fake come from three different sources: randomly-sampled images $\hat{x}_{z\sim \mathcal{N}(0,1)}$, data reconstruction defined by individual hypotheses $\hat{x}_{z\sim \mathcal{N}(\mu_{z|x},\Sigma_{z|x})}$, the best combination of hypotheses according to the winner-takes-all loss $\hat{x}_{\text{best\_guess}}$.

Accordingly, the learning objective for the VAE generator becomes:
    \begin{equation}
    \min_G L_G = \min_G L_{WTA} +  KL(q_{\phi}(z|x)||\mathcal{N}(0,1)) - L_{D},
    \label{Eq:generator loss}
    \end{equation}
where KL denotes the symmetrized Kullback-Leibler divergence (Jensen-Shannon divergence). Intuitively, the discriminator enforces the generated hypotheses to remain in realistic data regions. The model is trained until the WTA-loss is minimized on the validation set.

\subsection{Avoiding mode collapse} 

To avoid mode collapse of the discriminator training and hypotheses, we propose to employ hypotheses discrimination. This is inspired by minibatch discrimination \citep{salimans2016improved}. Concretely, in each batch, the discriminator receives the pair-wise features-distance of generated hypotheses. Since batches of real images have large pair-wise distances, the generator has to generate diverse outputs to avoid being detected too easily. Training with hypotheses discrimination naturally leads to more diversity among hypotheses.

Fig. \ref{fig:diversity-is-better} shows a simple example of why more diversity among hypotheses is beneficial. The hypotheses correspond to cluster centers in the image-conditional space. Maximizing diversity among hypotheses is, hence, similar to the maximization of inter-class-variance in typical clustering algorithm such as Linear Discriminant Analysis \citep{mika1999fisher}.

\begin{figure}[t]
    \centering
     \begin{subfigure}{.15\textwidth}
     \centering
      \includegraphics[width=1.0\linewidth]{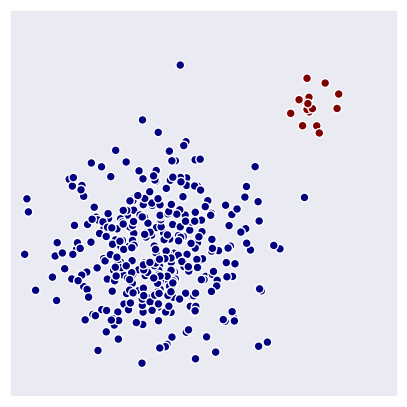}
      \caption{}
      \label{fig:sfig1Diversity}
    \end{subfigure}%
    \begin{subfigure}{.15\textwidth}
    \centering
      \includegraphics[width=1.0\linewidth]{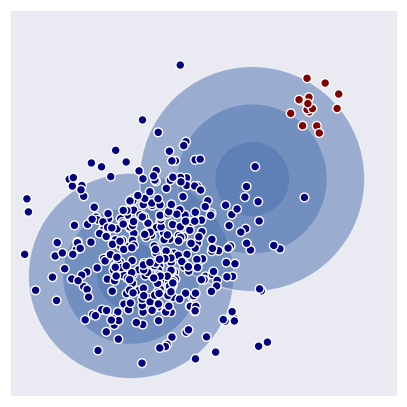}
      \caption{}
      \label{fig:sfig2Diversity}
    \end{subfigure}
    \begin{subfigure}{.15\textwidth}
        \centering
      \includegraphics[width=1.0\linewidth]{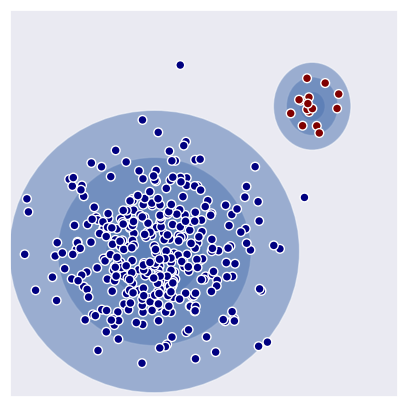}
      \caption{}
      \label{fig:sfig3Diversity}
    \end{subfigure}
     \caption{(a) Modeling task with one extremely dominant data mode (dense region) and one under-represented mode. (b) shows how multiple-hypotheses predictions are used to cover data modes. Hypotheses tend to concentrate on the dominant mode, which leads to over-fitting in this region. (c) Increasing diversity across hypotheses (similar to  maximizing inter-class variance) leads to better coverage of the underlying data.}
    \label{fig:diversity-is-better}
\end{figure}

\subsection{Anomaly score estimation based on local neighborhood}
\label{ana: local neighborhood}

Hypotheses are spread out to cover the data modes seen during training. Due to the loose coupling between hypotheses, the probability mass of each hypothesis is only distributed \emph{within the respective cluster}. Compared to traditional likelihood learning, the conditional probability mass only sums up to $1$ within each hypothesis branch, i.e., the combination of all hypotheses does not yield a proper density function as in mixture density networks. 
However, we can use the winner-takes-all loss as the pixel-wise sample anomaly score. Hence, each pixel likelihood is only evaluated based on the best-matching conditional hypothesis. We refer to this as anomaly detection based on local likelihood estimation. 

\paragraph{Local likelihood is more effective for anomaly score estimation}

Fig. \ref{fig:MDN-LOF-MHP} provides an intuition, why the local neighborhood is more effective in anomaly detection. The red point represents a new normal point  which is very close to one less dominant data mode. By using the global likelihood function (Fig. \ref{fig:sfig3LOF}), the anomaly score depends on all other points.

However, samples further away intuitively do not affect the anomaly score estimation. In Local-outlier-factor \citep{breunig2000lof}, outlier score estimation only depends on samples close to the new point (fig. \ref{fig:sfig4LOF}).  Similarly, our multi-hypotheses model considers only the next cluster (fig. \ref{fig:sfig5LOF}) and provides a more accurate anomaly score.

Further, learning local likelihood estimations is easier and more sample-efficient than learning from a global likelihood function, since the local model need not learn the global dependencies. During training, it is sufficient if samples are covered by at least one hypothesis. 

In summary, we estimate the anomaly scores based on the consistency of new samples regarding the closest hypotheses. Accordingly, we refer to our framework as \emph{consistency-based anomaly detection (ConAD)}.

\section{Related works}%
In high-dimensional input domains such as images, modern generative models \citep{kingma2013auto, goodfellow2014generative} are typically used to learn the data distribution for the normal data \citep{cong2011sparse,li2014anomaly,ravanbakhsh2017abnormal}. In many cases, anomaly detection might improve the models behavior in out-of-distribution cases \citep{nguyen2018multisource}.

For learning in uncertain tasks,  \citet{koltun_multi_choise,bhattacharyya2018accurate,rupprecht_learning_2016,ilg_uncertainty_2018} independently proposed multiple-hypotheses-predictions (MHP) networks. More details about theses works can be found in the Appendix.

In contrast to previous MHP-networks, we propose to utilize these networks for anomaly detection for the first time. To this end, we introduce a strategy to avoid the support of artificial data modes, namely via a discriminator as a critic. 
\cite{rupprecht_learning_2016} suggested a soft WTA-loss, where the non-optimal hypotheses receive a small fraction of the learning signal. Depending on the softening parameter $\epsilon$, the model training results in a state between mean-regression (i.e., uni-modal learning) and large support of non-existing data modes (more details in the Appendix). Therefore, the soft-WTA-loss is a compromise of contradicting concepts and, thus, requires a good choice of the corresponding hyperparameter. 
In the case of anomaly detection, the hyperparameter search cannot be formalized, since there are not enough anomalous data points available.

Compared to previous reconstruction-based anomaly detection methods  (using, e.g., \citet{kingma2013auto, bishop1994mixture}), our framework evaluates anomaly score only based on the local instead of the global neighborhood. Further, the model learns from a relaxed version of likelihood maximizing, which results in better sample efficiency.

\section{Experiments}

    In this section, we compare the proposed approach to previous deep learning and non-deep learning techniques for one-class learning tasks. 
    Since true anomaly detection benchmarks are rare, we first tested on CIFAR-10, where one class is used as the normal case to be modeled, and the other 9 classes are considered as anomalies and are only available at test time. Besides, we tested on a true anomaly detection task on a metal anomaly dataset, where arbitrary deviations from the normal case can appear in the data. 
    \subsection{Network architecture} 
        The networks are following DCGAN \citep{radford_unsupervised_2015} but were scaled down to support the low-resolution of CIFAR-10. Concretely, the decoder only uses a sequence of Dense-Deconv.-Conv.-Deconv. layers and on top, $2*n$ Deconv. layer for $n$ hypotheses branches. Each branch requires two layers since for each pixel position, the network predicts a $\mu$ and$\sigma$ for the conditional distribution.  Further, throughout the network, leaky-relu units are employed. 
        
        Hypotheses branches are represented as decoder networks heads. Each hypothesis predicts one Gaussian distribution with diagonal co-variance $\Sigma$ and means $\mu$. The winner-takes-all loss operates on the pixel-level, i.e., for each predicted pixel, there is a single winner across hypotheses. The best-combined-reconstructions is the combination of the winning hypotheses on pixel-level.
        
        \subsection{Training }
        For training with the discriminator in Fig. \ref{fig:discriminator training}, samples are forwarded separately through the network. The batch-size $n$ was set to 64 each on CIFAR-10, 32 on the Metal Anomaly dataset. Adam \citep{kingma2014adam} was used for training with a learning rate of 0.001. Per discriminator training, the generator is trained at most five epochs to balance both players.  We use the validation set of samples from the normal class to early stop the training if no better model regarding the corresponding loss could be found.

    \begin{table}[t]
    \caption{Dataset description. CIFAR-10 is transformed into 10 anomaly detection tasks, where one class is used as the normal class, and the remaining classes are treated as anomalies. The train \& validation datasets contain only samples from the normal class. This scenario resembles the typical situation where anomalies are extremely rare and not available at training time, as in the Metal Anomaly dataset.}
   \vskip 0.15in
\begin{center}
\begin{small}
\begin{sc} 
    \centering
        \begin{tabular}[t]{lp{.6cm}cc}%
        \toprule
        \midrule
&Type &CIFAR-10  &Metal anomaly \\
\cmidrule{2-4}
Problem &\hspace{.35cm}- &   1 vs. 9  &1 vs. 1 \\
Tasks & \hspace{.35cm}- &  10 &1 \\
Resolution & \hspace{.35cm}- &32x32 &224x224 \\
\cmidrule{2-4}
\multirow{3}{*}{Normal data} &  Train &  4500 &5408 \\
                             &  Valid &   500  &1352 \\
                             &  Test &    1000 &1324 \\
\cmidrule{2-4}
Anomaly &  Test & 9000 & 346 \\
        \midrule
        \bottomrule
        \end{tabular} 
        \label{Tab: Dataset}
        \end{sc}
        \end{small}
        \end{center}
    \end{table}

    \subsection{Evaluation }
    \paragraph{Experiments details} Quantitative evaluation is done on CIFAR-10 and the Metal Anomaly dataset  (Tab.\ref{Tab: Dataset}). The typical 10-way classification task in CIFAR-10 is transformed into 10 one vs. nine anomaly detection tasks. Each class is used as the normal class once; all remaining classes are treated as anomalies.  During model training, only data from the normal data class is used, data from anomalous classes are abandoned. At test time, anomaly detection performance is measured in Area-Under-Curve of Receiver Operating Curve (AUROC) based on normalized negative log-likelihood scores given by the training objective. 

    In Tab. \ref{Tab:CIFAR-10-summary},  we evaluated on CIFAR-10 variants of our multiple-hypotheses approaches including the following energy formulations: MDN \citep{bishop1994mixture}, MHP-WTA \citep{ilg_uncertainty_2018}, MHP \citep{rupprecht_learning_2016}, ConAD, and MDN+GAN. We compare our methods against vanilla VAE \citep{kingma2013auto, rezende2014stochastic} , VAEGAN \citep{larsen2015autoencoding,dosovitskiy2016generating}, AnoGAN \citep{schlegl2017unsupervised}, AdGAN \citealp{deecke2018anomaly}, OC-Deep-SVDD \citep{pmlr-v80-ruff18a}. Traditional approaches considered are: Isolation Forest \citep{liu2008isolation,liu2012isolation}, OCSVM \citep{scholkopf2001estimating}. The performance of traditional methods suffers due to the curse of dimensionality \citep{zong2018deep}. 
    
    Furthermore, on the high-dimensional Metal Anomaly dataset, we focus only on the evaluation of deep learning techniques. The GAN-techniques proposed by previous work AdGAN \& AnoGAN heavily suffer from instability due to pure GAN-training on a small dataset. Hence, their training leads to random anomaly detection performance. Therefore, we only evaluate MHP-based approaches against their uni-modal counterparts (VAE, VAEGAN).

\paragraph{Anomaly detection on CIFAR-10}

    Tab. \ref{Tab:CIFAR-10-all} and Tab. \ref{tab:Effect of hypotheses numbers on different models} show an extensive evaluation of different traditional and deep learning techniques. Results are adopted from \cite{deecke2018anomaly} in which the training and testing scenarios were similar. The average performance overall 10 anomaly detection tasks are summarized in Tab. \ref{Tab:CIFAR-10-summary}.
\begin{table}[h]
\caption{Anomaly detection on CIFAR-10, performance measured in AUROC. Each class is considered as the normal class once with all other classes being considered as anomalies, resulting in 10 one-vs-nine classification tasks. Performance is averaged for all ten tasks and over three runs each (see Appendix for detailed performance).  Our approach significantly outperforms previous non-Deep Learning and Deep Learning methods.
        }
   \vskip 0.15in
\centering
\begin{small}
\begin{sc} 
\begin{tabular}{p{1.1cm}C{1.5cm}C{1.5cm}C{.90cm}c  }
    \toprule
    \midrule
 Type    & \multicolumn{4}{c} {Models}\\
 \midrule
 \multirow{2}{*}{Non-DL.}  & KDE-PCA  & OC-SVM-PCA & IF & GMM \\
 & 59.0 &   61.0 & 55.8   &   58.5 \\
 \cmidrule{2-5}
 \multirow{2}{*}{DL}&. AnoGAN & OC-D-SVDD  & ADGAN & ConAD \\
& 61.2 & 63.2 & 62.0  & \textbf{67.1}\\
\midrule
    \bottomrule
\end{tabular} 
\label{Tab:CIFAR-10-summary}
    \end{sc}
    \end{small}
    \end{table}    
    Traditional, non-deep-learning methods only succeed to capture classes with a dominant homogeneous background such as ships, planes, frogs (backgrounds are water, sky, green nature respectively). This issue occurs due to preceding feature projection with PCA, which focuses on dominant axes with large variance. \cite{deecke2018anomaly} reported that even  features from a pretrained AlexNet have no positive effect on anomaly detection performance.

    Our approach ConAD outperforms previously reported results by 3.9\% absolute improvement. Furthermore, compared to other multiple-hypotheses-approaches (MHP, MDN, MHP+WTA), our model could benefit from the increased capacity given by the additional hypotheses. The combination of discriminator training and a high number of hypotheses is crucial for high detection performance as indicated in our ablation study (Tab. \ref{tab:Ablation study}).

    \begin{table*}[h]
    \centering
    \caption{CIFAR-10 anomaly detection: AUROC-performance of different approaches. The column indicates which class was used as in-class data for distribution learning. Note that random performance is at 50\% and higher scores are better. Top-2-methods are marked. Our ConAD approach outperforms traditional methods and vanilla MHP-approaches significantly and can benefit from an increasing number of hypotheses.
        }
        \vskip .15in
        \begin{small}
        \begin{sc}
            \begin{tabular}{l||ccccccccccc||c}%
            CIFAR-10 & 0 & 1 & 2 & 3 & 4 & 5 & 6 & 7 & 8 & 9 & Mean\\
            \hline

            VAE   & 77.1  & 46.7   & 68.4   & 53.8   & 71.    & 54.2   & 64.2   & 51.2   & \textbf{76.5}   & 46.7   & 61.0\\ 
            OC-D-SVDD & 61.7& 65.9& 50.8& 59.1& 60.9& 65.7& 67.7& 67.3& 75.9& 73.1 & 63.2\\

            \hline
            
            MDN-2 & 76.1 &  46.9 &  68.7 &  53.8 &  70.4 &  53.8 &  63.2 &  52.3 &  \textbf{76.8} & 46.7  & 60.9 \\
            MDN-4 & 76.9 &  46.8 &  68.6 &  53.5 &  69.3 &  54.4 &  63.5 &  54.1 &  76.  & 46.9  & 61.0 \\
            MDN-8 & 76.2 &  46.9 &  68.6 &  53.3 &  70.4 &  54.7 &  63.3 &  53.  &  76.3 & 47.3  & 61. \\
            MDN-16 & 76.2 &  47.9 &  68.2 &  52.8 &  70.1 &  54.  &  63.5 &  52.9 &  76.4 & 46.9  & 60.9 \\
            \hline
            MHP-WTA-2 & 77.3 &  51.6 &  68.  &  55.2 &  69.5 &  54.3 &  64.3 &  55.5 &  76.  & 51.2  & 62.2 \\
            MHP-WTA-4 & \textbf{77.8} &  53.9 &  65.1 &  56.7 &  66.  &  54.2 &  63.5 &  56.3 &  75.2 & 54.1  & 62.2 \\
            MHP-WTA-8 & 76.1 &  56.  &  62.7 &  58.8 &  62.6 &  55.3 &  61.4 &  57.8 &  74.3 &  54.8 & 61.9 \\
            MHP-WTA-16 & 75.7 &  56.7 &  60.9 &  59.8 &  62.7 &  56.  &  61.  &  56.8 &  73.8 & 57.3  & 62. \\
            \hline
            MHP-2 & 75.5 &  49.9 &  67.6 &  54.6 &  69.3 &  54.3 &  63.6 &  57.7 &  76.4 & 50.8  & 61.9 \\
            MHP-4 & 75.2 &  51.  &  66.  &  56.8 &  67.7 &  55.1 &  64.4 &  56.  &  76.4 & 51.   & 61.9 \\
            MHP-8 & 75.7 &  54.  &  65.2 &  57.6 &  64.8 &  55.4 &  62.5 &  54.7 &  75.9 & 53.   & 61.8 \\
            MHP-16 & 75.8 &  53.9 &  64.1 &  58.5 &  64.6 &  55.2 &  62.3 &  54.5 &  75.9 & 53.2  & 61.7 \\
            \hline
            MDN+GAN-2 & 74.6 &  48.9 &  68.6 &  52.1 &  71.1 &  52.5 &  66.8 &  57.7 &  76.5 & 48.1  & 61.6 \\
            MDN+GAN-4 & 76.2 &  50.4 &  69.  &  52.4 &  71.6 &  53.2 &  65.9 &  58.3 &  75.3 & 48.9  & 62.1 \\
            MDN+GAN-8 & 77.4 &  48.3 &  69.3 &  53.1 &  72.2 &  53.7 &  67.9 &  54.  &  76.  & 51.9  & 62.3 \\
            MDN+GAN-16 & 73.6 &  46.9 &  69.4 &  52.2 &  75.3 &  54.1 &  65.7 &  56.8 &  75.3 & 45.4  & 61.4 \\
            \hline

            \hline
            
            ConAD - 2 (ours) & 77.3 &    60.0 &  66.6 &    56.2 &    69.4 &    56.1 &    70.6 &    63.0 &     74.8 &    49.9 &    64.3 \\
            ConAD - 4 (ours) & \textbf{77.6} &    52.5 &    66.3 &    57.0 &     68.7 &    54.1 &    \textbf{80.1} &    54.8 &    74.1 &    53.9 &    {63.9} \\
            ConAD - 8 (ours) & 77.4 &    \textbf{65.2} &    64.8 &    60.1 &    67.0 &     {57.9} &     72.5 &    \textbf{66.2} &    74.8 &    \textbf{66.0} &     \textbf{67.1} \\
            ConAD - 16 (ours)& 77.2& \textbf{63.1}& 63.1& \textbf{61.5}& 63.3& \textbf{58.8}& 69.1& \textbf{64.0}& 75.5& \textbf{63.7}& \textbf{65.9}\\
            
        \end{tabular} 
        \end{sc}
        \end{small}

        \label{Tab:CIFAR-10-all}
    \end{table*}

\begin{figure*}[h]
\centering
        \begin{subfigure}{.12\linewidth}
            \centering
      \includegraphics[width=.97\linewidth]{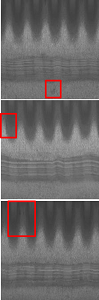}
      \caption{}
      \label{fig:teaser-sfig11}
    \end{subfigure}%
    \begin{subfigure}{.27\textwidth}
      \centering
      \includegraphics[width=.95\linewidth]{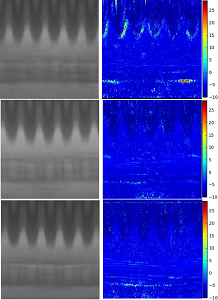}
      \caption{}
      \label{fig:teaser-sfig21}
    \end{subfigure}
    \begin{subfigure}{.265\textwidth}
      \includegraphics[width=.97\linewidth]{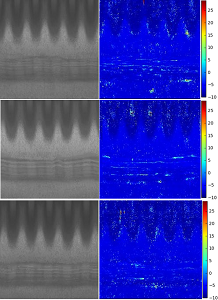}
      \caption{}
      \label{fig:sfig31}
    \end{subfigure}
    \caption{(a) anomalous samples on Metal Anomaly data-set. Anomalies are highlighted. (b) shows maximum-likelihood reconstructions under a Variational Autoencoder and the corresponding anomaly heatmaps based on negative-log-likelihood. (c) shows the reconstructions and anomaly maps for ConAD. In all cases, the maximum-likelihood expectation under the unimodal model is blurry and should itself be seen as an anomaly. Contrary, under our model, the maximum-likelihood expectation of the input is much closer to the input and more realistic. Due to the fine-grained learning, the anomaly heatmaps could reliably identify the location and strength of possible anomalies.}
    \label{fig:more Metal_anomaly}
\end{figure*}

 \begin{table}[h]
      \caption{Anomaly detection performance on CIFAR-10 dependent on multiple-hypotheses-predictions models and hypotheses number. Performance averaged over tasks and in multiple runs each.}
\vskip .15in
\centering
 \begin{small}
 \begin{sc}
 \begin{tabular}{l C{2cm} *{4}{p{.48cm}}}
    \toprule
    \midrule
    & \multicolumn{5}{c}{Hypotheses}\\
    \cmidrule{2-6}
    Models & 1 & 2 & 4 & 8 & 16 \\ 
    \midrule
    MHP & \multirow{3}{*}{61.0 =VAE} & 61.9 & 61.9  & 61.8&  61.7\\ 
    MHP+WTA &  & 62.2 & 62.2  & 61.9 & 62.0\\ 
    MDN &  &  60.9& 61.0 &   61.0&  60.9\\ 
    \cmidrule{2-6}
    MDN+GAN & \multirow{2}{*}{61.7 =VAEGAN} &  61.6& 62.1 &  62.0 & 61.4 \\ 
    ConAD &  & \textbf{64.3} & \textbf{63.9} & \textbf{67.1}& \textbf{65.9}\\ 
    \midrule
    \bottomrule
    \label{tab:Effect of hypotheses numbers on different models}
    \end{tabular}
          
 \end{sc}
 \end{small}
 \end{table}  
 
    \begin{table}[h]
    \centering
    \caption{Ablation study of our approach ConAD on CIFAR-10, meausured in anomaly detection performance (AUROC-scores on unseen   contaminated dataset). }     
    \vskip 0.15in 
    \begin{small}
    \begin{sc}
        \begin{tabular}{l c}
        \toprule
        \midrule
        Configuration & AUROC\\
        \midrule
         ConAD (8-hypotheses)&  \textbf{67.1}\\
         - fewer hypotheses (2) & 64.3\\
         - Discriminator&  61.9\\
         - Winner-takes-all-loss (WTA)& 61.8\\
         - WTA \& loose hyp. coupling & 61.0\\
         - Multiple-hypotheses & 61.7\\
         - Multiple-hypotheses \& discriminator  &  61.0 \\
         \midrule
         \bottomrule
    \end{tabular}
    \label{tab:Ablation study}
    \end{sc}
    \end{small}
\end{table}

    \paragraph{Anomaly detection on Metal Anomaly dataset}
    Fig. \ref{fig:more Metal_anomaly} shows a qualitative analysis of uni-modal learning with  VAE \citep{kingma2013auto} compared to  our framework ConAD. Due to the fine-grained learning with multiple-hypotheses, our maximum-likelihood reconstructions of samples are  significantly closer to the input. Contrary, VAE training results in blurry reconstructions and hence falsified anomaly heatmaps, hence  cannot separate possible anomaly from dataset details.

    Tab. \ref{tab:a} shows an evaluation of MHP-methods against multi-modal density-learning methods such as MDN \citep{bishop1994mixture}, VAEGAN \citep{dosovitskiy2016generating,larsen2015autoencoding}. Note that the VAE-GAN model corresponds to our ConAD with a single hypothesis. The VAE corresponds to a single hypothesis variant of MHP, MHP-WTA, and MDN.

    \begin{table}[h]
       \vskip 0.15in
        \centering
        \caption{Anomaly detection performance and their standard variance on the Metal Anomaly dataset. To reduce noisy residuals due to the high-dimensional input domain, only 10\% of maximally abnormal pixels with the highest residuals are summed to form the total anomaly score. AUROC is computed on an unseen test set, a combination of normal and anomaly data. For more detailed results see Appendix. The anomaly detection performance of plain MHP rapidly breaks down with an increasing number of hypotheses. }
        \vskip 0.15in

        \begin{small}
        \begin{sc} 
        \begin{tabular}{p{1.5cm}p{.50cm}ccc }
        \toprule
        \midrule
         & \multicolumn{4}{c}{Hypotheses}\\
         \cmidrule{2-5}
     Model & 1 & 2 & 4 & 8  \\ 
    \midrule

MHP & \multirow{3}{*}{\shortstack[l]{94.2 \\(1.4)}}  & 98.0  (0.5)& 97.0  (1.0)& 95.0  (0.2)\\
MHP+WTA &  & 98.0  (0.9)& \textbf{98.0} (0.1)& 94.6  (3.3)\\
MDN & & 90.0  (1.1)& 91.0  (1.9)& 91.6  (3.5)\\
\cmidrule{2-5}
MDN+GAN & \multirow{2}{*}{\shortstack[l]{93.6 \\(0.7)}} & 94.2  (1.6)& 91.3  (1.9) &  94.3  (1.1)\\
ConAD & & \textbf{98.5} (0.1) &  97.7 (0.5)&  \textbf{96.5} (0.2)\\
    \midrule
    \bottomrule
        \end{tabular}
        \end{sc}
        \end{small}
        \label{tab:a}
\end{table}

    The significant improvement of up to 4.2\% AUROC-score comes from the loose coupling of hypotheses in combination with a discriminator D as quality assurance. In a high-dimensional domain such as images, anomaly detection with MDN is worse than MHP approaches. This result from  (1) typical mode collapse in MDN and (2) global neighborhood consideration for anomaly score estimation. 
    Using the MHP-technique, better performance is already achieved with two hypotheses. However, without the discriminator D,  an increasing number of hypotheses rapidly leads to performance breakdown, due to the inconsistency property of generated hypotheses. Intuitively, additional non-optimal hypotheses are not strongly penalized during training, if they support artificial data regions. 
    With our framework ConAD, anomaly detection performance remains competitive or better even with an increasing number of hypotheses available. The discriminator D makes the framework adaptable to the new dataset and less sensitive to the number of hypotheses to be used.

    When more hypotheses are used (8), the anomaly detection performance in all multiple-hypotheses models rapidly breaks down. The standard variance of performance of standard approaches remains high (up to $\pm$ 3.5). The reason might be the beneficial start for some hypotheses branches, which adversely affect non-optimal branches.

    This effect is less severe in our framework ConAD. The standard variance of our approaches is also significantly lower. We suggest that the noise is then learned too easily. Consider the extreme case when there are 255 hypotheses available. The winner-takes-all loss will encourage each hypothesis branch to predict a constant image with one value from [0,255]. In our framework, the discriminator as a critic attempts to alleviate this effect.   That might be a reason why our ConAD has less severe performance breakdown. Our model ConAD is less sensitive to the choice of the hyper-parameter for the number of hypotheses. It  enables better exploitation of the additional expressive power provided by the MHP-technique for new anomaly detection tasks.
    
    Our method can detect more subtle anomalies due to the focus on extremely similar samples in the local neighborhood. However, the added capacity by the hypotheses branches makes the network more sensitive to  large  label noise in the datasets. Hence, robust anomaly detection under label noise is a possible future research direction.
    \section{Conclusion}
    
    In this work, we propose to employ multiple-hypotheses networks for learning data distributions for anomaly detection tasks. 
    Hypotheses are meant to form clusters in the data space and can easily capture model uncertainty not encoded by the latent code. Multiple-hypotheses networks can provide a more fine-grained description of the data distribution and therefore enable also a more fine-grained anomaly detection.
    Furthermore, to reduce support of artificial data modes by hypotheses learning, we propose using a discriminator D as a critic.  
    The combination of multiple-hypotheses learning with D aims to retain the consistency of estimated data modes w.r.t. the real data distribution. Further, D encourages diversity across hypotheses with hypotheses discrimination. Our framework allows the model to identify out-of-distribution samples reliably. 
    
    For the anomaly detection task on CIFAR-10, our proposed model results in up to 3.9\% points improvement over previously reported results. On a real anomaly detection task, the approach reduces the error of the baseline models from 6.8\% to 1.5\%.

\newpage

\section*{Acknowledgements}
  This research was supported by Robert Bosch GmbH. We thank our colleagues Oezguen Cicek, Thi-Hoai-Phuong Nguyen and the four anonymous reviewers who provided great feedback and their expertise to improve our work.

\bibliography{iclr2018_conference}
\bibliographystyle{icml2019}

%
%
%
%
%

 \appendix
Definitions and more formal discussions are provided in these sections.

\section{Mixture Density Network}
\label{Appendix:MDN}
The Mixture Density networks predict a data conditional  Gaussian mixture model (GMM)in the data space. Conditioning means that each latent vector, i.e., a point on the learned manifold is projected back to a GMM in the data space.

A GMM learns from the following energy function:
\begin{equation}
L_{GMM}(x) = -\log \sum_h \alpha_h \mathcal{N}(x;\mu_{h},\sigma_h)
\end{equation}
Whereby $x$ is the input data, $\mu_h$ and $\sigma_h$ parametrize the $h-th$ Gaussian distribution  in the mixture. $\alpha_h$ are the mixing coefficients across the individual mixtures.

Contrary, a Mixture Density network hat multiple output heads (multiple-hypotheses). The framework extends the GMM-learning by the data conditioning as follows:
\begin{equation}
L_{MDN}(x) =  E_{z_i \sim q_{\phi}(z_i|x)}
\left[ L_{GMM}(x|z_i)\right]  
\label{eq:-MDN}
\end{equation}
whereby $q_\phi$ is a inference network shared by all individual mixtures. $z$ is the latent code.  
The hypotheses are coupled into forming a likelihood function by the mixing coefficients $\alpha_i$.

\section{Multimodal learning on the flipped moon toy dataset}
\label{app:flipped-moon-dataest}

\begin{figure}[h]
	\begin{center}
		\includegraphics[width=1\linewidth]{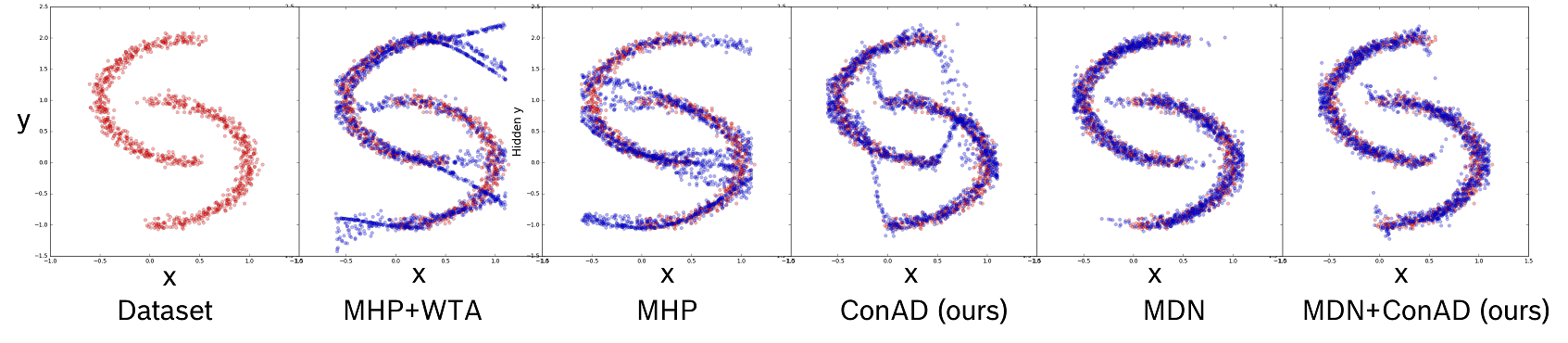}
	\end{center}
	\caption{Flipped half-moon dataset: conditional prediction of $y$ based on $x$. Red points are samples from true distribution while blue points represent samples from distributions approximations. Learning with multiple-hypotheses predictions (MHP) loss or MHP + Winner-takes-all (WTA) loss lead to support of artificial data regions. Mixture density networks and our approach ConAD reduces this effect. }

	\label{fig:flipped half moon}
\end{figure}

Fig. \ref{fig:flipped half moon} shows the flipped half-moon dataset to demonstrate MHP-learning in contrast to unimodal output distribution learning. In this section, Fig \ref{fig:flipped half moon} shows a qualitative evaluation of different MHP-techniques. This task is a one-to-many mapping from $x$ to $y$ with a discontinuity at the point $x=0$ and $x=0.5$.

When the local density function abruptly ends, MHP-techniques support artificial data regions since they are not penalized for artificial modes by the objective function as discussed before. We refer to this property as an inconsistency concerning the true underlying distribution. In contrast to that, Mixture Density Networks (MDN) and our ConADs approaches reduce the inconsistencies to the minimum.

\section{One-to-many mapping tasks require multi-modality}

Consider a simple toy problem with an observable $x$ and hidden $y$ which is to be predicted and expressed by the conditional distribution $p_{true}(y|x)$ such as in Fig. \ref{fig:flipped half moon}. Since the data conditional is multi-modal for some $x$, an uni-modal output distribution cannot fully capture the underlying distribution. Instead, the bias-free solution for the Mean-Squared-Error-minimizer  is the empirical mean  $\overline{y_{x_i}}$ of  $p_{train}(y|x_i)$ on the training set. However, this learned conditional density does not comply with the underlying distribution: sampled data points fall into the low-likelihood regions under $p_{true}(y|x)$.
With increasing number of output hypotheses, the data modes could be gradually captured. For  this task, the energy to be minimized  is given by the Negative-log-likelihood of the Mixture Density Network (MDN) App. \ref{Appendix:MDN} under a Gaussian Mixture with hypotheses h in  Eq. \ref{Eq:MDN-toy task} : 
\begin{dmath}
	E_{MDN}(\Theta) = - \log L(\Theta|X;Y) = -\log p_{GMM}(Y|X,\Theta ) = - \sum_{i} \sum_{h} \log \alpha_h p_{\theta_h}(y_i|x_i)
	\label{Eq:MDN-toy task-1}
\end{dmath}
with 
\begin{equation}
p_{\theta_h}(y_i|x_i,\theta_h) = \frac{1}{\sqrt{2\pi}\sigma_h} \exp-\frac{(y_i-\mu_h)^2}{2\sigma_h^2}
\label{Eq:MDN-toy task}
\end{equation}

\section{Lemma 4.1}
\label{Appendix:Lemma 4.1}
Given a sufficient number of hypotheses H', an optimal solution $\Theta^*$ for $E_{WTA}(\Theta^{*})$ is not unique (permutation is excluded). There exists a $\Theta^{'}$ with $E_{WTA}(\Theta^{*}) = E_{WTA}(\Theta^{'})$ which is not consistent w.r.t. the underlying output distribution $p_{train}(y_i|x_i)$.

\begin{proof}: Suppose $c$ is the maximal modes count of the dataset sampled from the real underlying conditional output distribution $p(y_i|x_i)$. Since $\left|\left\lbrace (x_i,y_i)\right\rbrace\right | < \infty \rightarrow  c < \infty$. 
	
	Suppose $H=c$, then a trivial optimal solution for $E_{WTA}(\Theta_H)$ is found by centering each hypothesis $\mu_{ik}$ at a different empirical data point $k$ $y_{ik} \sim (y_i,x_i)$ and $\sigma_{ik} \mapsto 0$. In this case $\lim\limits_{\sigma_{ik}\mapsto 0;\forall i,k} E_{WTA}(\widehat{\Theta}_H) = 0$. 
	
	Suppose $H'>c$, then a solution $\widehat{\Theta}_{H'}$ can be formulated s.t.:   $E(\widehat{\Theta}_H) = E(\widehat{\Theta}_{H'})$.  
	
	Let $\widehat{\Theta}_{H'}= \widehat{\Theta}_H \cup \widehat{\Theta}_{H+1\dots H'}= \widehat{\Theta}_H \cup \left\lbrace \theta_{h+1}\dots \theta_{h'}\right\rbrace$ for some \textbf{ random $\widehat{\Theta}_{H+1\dots H'}$}. Due to randomness and without loss of generality, one can assume that $\forall (x_i,y_i), \forall\theta_i \in \Theta_{H+1\dots H'}$, $\theta_i$ is not the optimal hypothesis for any training point $(x_i, y_i) \in D_{train}$.

	In this case due to the winner-takes-all energy formulation we have: 
	
	\begin{dmath}
		E_{WTA}(\widehat{\theta}_{H'}) =  -\sum_{i} \max_{1\leq h\leq H'} \log  p_{\theta_h}(y_i|x_i)  = -\sum_{i} \max_{1\leq h\leq H} \log  p_{\theta_h}(y_i|x_i) 
		=E_{WTA}(\widehat{\theta}_{H})\
	\end{dmath}
	So $\widehat{\Theta}_{H}$ and $\widehat{\Theta}_{H'}$ with $H'>H$ are both solutions to the loss formulation and share the same energy level. The extended hypotheses can support arbitrary artificial data regions without being penalized.
\end{proof}
\section{Lemma 4.2}

\begin{dmath}
	E_{MHP}(\Theta) =  - \sum_{i} \sum_{h} \log \left( p_{\theta_h}(y_i|x_i) \right) *  \begin{cases}
		1-\epsilon, p_{\theta_h}(y_i|x_i) \geq p_{\theta_k}(y_i|x_i), \forall k \\
		\frac{\epsilon}{H-1},\text{else}
	\end{cases}
	\label{Eq:MHP-Ruprechttoy task}
\end{dmath}

Whereby $x_i$,$y_i$ is corresponding input-output pairs from the training dataset, $1\leq h\leq H$ is a hypothesis branch, which is generated by a parametrized neural network with the parameter set $\theta_h$. Furthermore,  $\epsilon$  is a hyperparameter used to distribute the learning signal to the non-optimal hypotheses. $\Theta$ is the collection of all $\theta_h$.
\label{Appendix:Lemma 4.2}
\begin{lemma}
	Similar to Lemma \ref{Appendix:Lemma 4.1}, minimizing $E_{MHP}$ in Eq. \ref{Eq:MHP-Ruprechttoy task} might also lead to an inconsistent approximation of the real underlying output distribution.
\end{lemma}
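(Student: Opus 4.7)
The plan is to mirror the construction used in the proof of Lemma~\ref{Appendix:Lemma 4.1} while isolating where the soft weight $\tfrac{\epsilon}{H-1}$ on the non-winning branches fails to enforce consistency with $p_{train}$. I would exhibit a training configuration whose optimal $\widehat{\Theta}_{H'}$ under $E_{MHP}$ necessarily supports at least one artificial data region, regardless of the value of $\epsilon$.

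First, I would fix a single input $x_0$ and a strongly bimodal conditional training distribution $p_{train}(y|x_0)$ concentrated on narrow clusters near $y = +1$ and $y = -1$ with equal mass, so that the modal count is $c = 2$, and choose $H' > c$. I then construct a candidate $\widehat{\Theta}_{H'}$ in which the first two hypotheses $\theta_1, \theta_2$ are degenerate Gaussians with $\sigma \to 0$ centered on the two clusters, so that for every training example one of them is the winner with diverging likelihood. The remaining $H' - 2$ hypotheses cannot outbid the degenerate winners at any data point, so their contribution to $E_{MHP}$ collapses to the pure soft term $-\tfrac{\epsilon}{H'-1}\sum_i \log p_{\theta_h}(y_i|x_0)$.

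Second, I would identify the optimum of this soft term: it is exactly the unweighted Gaussian MLE objective on the training data, whose minimizer is the empirical mean and variance, i.e.\ $\mu^{*} = 0$ and $\sigma^{*\,2} \approx 1$. The resulting Gaussian places substantial probability mass in the gap between the two true clusters, a region where $p_{train}$ has essentially no support. Consequently every extra hypothesis settles into an artificial data mode at $y \approx 0$, witnessing inconsistency. To close the argument I would check local optimality: the degenerate winners cannot be improved, each extra branch sits at its own MLE optimum, and no small perturbation of $\theta_h$ can make it a winner at any data point since the winning Gaussians have diverging density at their centers. This rules out escape moves and confirms that the constructed $\widehat{\Theta}_{H'}$ is a stationary point at which the approximation fails to match $p_{train}$.

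The main obstacle I anticipate is handling the $\sigma \to 0$ limit cleanly, since the winner contribution is unbounded there. A careful write-up should either adopt the same limiting convention used in the proof of Lemma~\ref{Appendix:Lemma 4.1}, or equivalently work with winners of small but strictly positive variance and show that the inconsistency of the extra branches persists uniformly as that variance shrinks. A secondary subtlety is pinning down what ``inconsistent with the underlying distribution'' means for the $H'$-branch aggregate, since individual hypothesis branches do not naturally combine into a single normalized density; the cleanest statement is that at least one hypothesis assigns substantial density to a region of $y$-space where $p_{train}$ has essentially none, which suffices to mislead the local anomaly score described in Section~\ref{ana: local neighborhood}.
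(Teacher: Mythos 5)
There is a genuine gap, and it sits at the heart of your construction: degenerate winners are incompatible with the soft loss for any $\epsilon>0$. Under $E_{MHP}$ in Eq.~\ref{Eq:MHP-Ruprechttoy task}, a non-winning hypothesis at data point $(x_0,y_i)$ still contributes $-\tfrac{\epsilon}{H'-1}\log p_{\theta_h}(y_i|x_0)$ to the energy. Your winner $\theta_1$, centered at $+1$ with $\sigma_1\to 0$, is the winner only on the cluster near $+1$; on the cluster near $-1$ it is a \emph{loser}, and there its contribution is $\tfrac{\epsilon}{H'-1}\bigl(\log(\sqrt{2\pi}\,\sigma_1)+\tfrac{(y_i+(-1)\cdot(-1)-\mu_1)^2}{2\sigma_1^2}\bigr)\approx \tfrac{\epsilon}{H'-1}\cdot\tfrac{2}{\sigma_1^2}$, which diverges to $+\infty$ at rate $\sigma_1^{-2}$, whereas the gain from being the winner on its own cluster is only $(1-\epsilon)\log\sigma_1\to-\infty$ at logarithmic rate. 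The total energy of your configuration therefore tends to $+\infty$, so it is neither optimal nor stationary, and the subsequent "no escape moves" argument is moot. This is not the technicality you flag (the unboundedly \emph{good} winner term); it is the unboundedly \emph{bad} loser term of the same degenerate Gaussians. The construction only works at $\epsilon=0$, i.e.\ it reproves Lemma~\ref{Appendix:Lemma 4.1} rather than the present statement. Salvaging it requires winners with variance bounded away from zero, after which the comparison of energies becomes genuinely $\epsilon$-dependent and the clean limiting argument is lost.

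The paper's own proof avoids this trap by arguing only the two extremes of the admissible range $\epsilon\in[0,\tfrac{H-1}{H}]$: as $\epsilon\to\tfrac{H-1}{H}$ the weights become uniform, every hypothesis independently minimizes the full negative log-likelihood and collapses to the single-Gaussian MLE (mean regression), which is inconsistent with a multi-modal target; at the other extreme the loss reduces to $E_{WTA}$ and inherits the inconsistency of Lemma~\ref{Appendix:Lemma 4.1}; intermediate $\epsilon$ is then argued informally to interpolate between two inconsistent behaviours. Your correct observation --- that the pure soft term drives a branch to the global Gaussian MLE straddling the gap between modes --- is essentially the paper's first extreme case, applied to a subset of branches; but embedding it in a configuration that is actually a minimizer of $E_{MHP}$ for $0<\epsilon<\tfrac{H-1}{H}$ is precisely the step your proposal does not supply.
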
    

\begin{proof} First, note that $0 \leq \epsilon\leq \frac{H-1}{H}$, since $\epsilon <  0$ would push away non-locally optimal hypotheses from the empirical solution, $\epsilon > \frac{H-1}{H}$  would penalize the best hypothesis more than others. Both are undesired properties of MHP-learning. First consider the case where $\epsilon \mapsto  \frac{H-1}{H}$ :
	
	\begin{equation}
	\lim\limits_{\epsilon\mapsto \frac{H-1}{H}} E_{MHP}(\Theta)=  \sum_{i} \sum_{h} \log \left( p_{\theta_h}(y_i|x_i) \right) * \frac{1}{H}
	\end{equation}
	\begin{align*}
	&= \frac{1}{H}\sum_{h} \left(\sum_{i} \log \left( p_{\theta_h}(y_i|x_i) \right)\right) \\
	&= \frac{1}{H}\sum_{h} E_{\theta_{h}}\\
	\end{align*}
	$\forall \theta_h$ and training data points $(x_i,y_{ik})$ the optimal least-squares solution is the mean, therefore we have:
	\begin{align*}
	\theta_h^{*}(y_i|x_i)&=  E_{y_{ik \sim p(y|x_i)}[y_i]} \\
	&=\frac{1}{l} \sum_{i=1}^{l} y_i ; y_{ik} \sim p(y_i|x_i)\\
	\end{align*}
	In this case, all hypotheses are optimized independently and converge to the same solution similar to a single-hypothesis approach. The resulting distribution is inconsistent w.r.t the real output distribution (see Fig. \ref{fig:flipped half moon} for an example).  
	
	Now consider $\epsilon \mapsto 1$:
	\begin{dmath}
		\lim\limits_{\epsilon\mapsto 1}E_{MHP}(\Theta)= - \sum_{i} \sum_{h} \log \left( p_{\theta_h}(y_i|x_i) \right) *  \begin{cases}
			1;  \text{if } \theta_h \text{ is best hypothesis} \\
			0;\text{else}
		\end{cases}\\
		= -\sum_{i} \max_{1\leq h\leq H'} \log  p_{\theta_h}(y_i|x_i)\\
		= E_{WTA}(\Theta) 
	\end{dmath}
	In this case $E_{MHP}$ shares the same inconsistency property with $E_{WTA}$. Consequently, choosing  $\epsilon \in [0,\frac{H-1}{H}]$ only smoothes the penalty on suboptimal hypotheses. The risk remains that distributions induced by non-optimal hypotheses are beyond the real modes of the underlying distribution. 
	
\end{proof}
\section{Related works in detail}
Traditional one-class learning techniques \citep{scholkopf2001estimating,tax2004support,liu2008isolation,liu2012isolation,breunig2000lof} often fail in high-dimensional input domains and require careful feature selection \citep{zong2018deep}.  

To cope with high-dimensional domains, typically a reconstruction-based approach is used.  This paradigm learns the normal data distribution during training and uses the data likelihood as an anomaly score at test time. 
Recently, advances in generative modeling such as Generative Adversarial Network (GAN) \citep{goodfellow2014generative} and Variational Autoencoder (VAE) \citep{rezende2014stochastic,kingma2013auto} are used for anomaly detection  \citep{zong2018deep, schlegl2017unsupervised, deecke2018anomaly}.  However, GAN and VAE approaches have limitations in anomaly detection tasks. The GAN tends to assign less probability mass to real samples, while VAE typically regresses to the conditional means. The mean regression in VAE express the model uncertainty and falsify the reconstruction-errors for unseen images.

To address model uncertainty in VAE, the decoder is given additional expressive power with multi-headed decoders. The idea is to approximate multiple conditional modes (dense data regions) by using networks with multiple heads. This leads to training of multiple networks in Multi-Choice-learning \citep{dey2015predicting,lee2017confident,lee2016stochastic}, the estimation of a conditional Gaussian Mixture model in Mixture Density Networks (MDN) \citep{bishop1994mixture}, and multiple-hypotheses predictions (MHP) \citep{koltun_multi_choise,bhattacharyya2018accurate,rupprecht_learning_2016,ilg_uncertainty_2018}. In MDN, the mixtures are strictly coupled via mixture coefficients while mixtures in MHPs act as loosely coupled local density estimators. In MHP, only the best hypothesis branch will receive a learning signal, i.e., the one that best explains the training sample.

For anomaly detection, our model uses MHP-training with a VAE to address the model uncertainty directly. In MDN, the anomaly score is proportional to the weighted distances to all data modes, and in MHP only to closest data mode. To highlight the change in paradigm, we refer to this learning in MHP as consistency-based learning. Samples have a small effect on the loss as long as they are close to one single data mode. The learning dynamic in MHP is also different and more efficient than in MDN: the number of samples with a large loss is much lower. In this sense, we relax the learning objective from strict density-based to consistency-based learning.

This is related to the Local Outlier Factor (LOF) approach \citep{breunig2000lof}, where the outlier-score only depends on the local neighborhood. 
In LOF, the outlier score is proportional to the mean density of neighboring points divided by the local point density. Hence, distant samples do not influence the outlier-score. Motivated by this heuristic, our model employs learning of many loosely decoupled local density estimates with MHP-learning. %
While LOF computes the outlier score only at test time and directly in the input space, our model first approximates the data manifold and subsequently performs anomaly detection in the input space under the learned model.

The MHP-technique has been used for uncertainty estimation in  tasks like future prediction \citep{rupprecht_learning_2016-1} or optical flow prediction \citep{ilg_uncertainty_2018}.   In the simplest form, the multiple network heads learn from a winner-takes-all (WTA) loss, whereby only the best branch receives the learning signal. These works extended the loss with local smoothness terms \citep{ilg_uncertainty_2018} or  distribution of the learning signal also to the other, non-optimal branches \citep{rupprecht_learning_2016-1} to generate diverse and meaningful hypotheses.   

The major problem of MHP-approaches is that areas not supported by samples can be covered by unused hypotheses. This is fatal for anomaly detection. Therefore, our ConAD approach employs a discriminator D to assess the quality of the generated hypotheses to avoid support of non-existent data modes. To avoid mode collapse due to the GAN framework, we employ hypotheses discrimination. In the spirit of minibatch discrimination \citep{salimans2016improved}, D additionally receives pair-wise distances across a batch of hypotheses. Since a batch of real samples is typically diverse, D can detect a homogeneous batch of hypotheses as fake easily.

\section{Detailed performance on CIFAR-10}
\begin{table*}[h]
	\centering
	\caption{CIFAR-10 anomaly detection: AUROC-performance of different approaches. The column indicates which class was used as in-class data for distribution learning. Note that random performance is at 50\% and higher scores are better. Top-2-methods are marked. Our ConAD approach outperforms traditional methods and vanilla MHP-approaches significantly and can benefit from an increasing number of hypotheses.
	}
	\vskip .15in
	\begin{small}
		\begin{sc}
			\begin{tabular}{l||ccccccccccc||c}%
				CIFAR-10 & 0 & 1 & 2 & 3 & 4 & 5 & 6 & 7 & 8 & 9 & Mean\\
				\hline
				KDE-PCA           & 70.5 & 49.3 &\textbf{73.4} & 52.2 & 69.1 & 43.9 &   {77.1} & 45.8 & 59.5 & 49.0&59.0 \\
				KDE-Alexnet       & 55.9 & 48.7 &58.2 & 53.1 & 65.1 & 55.1 &   61.3 & 59.3 & 60.0 & 52.9&57.0 \\
				OC-SVM-PCA        & 66.6 & 47.3 &67.5 & 53.0 & 82.7 & 43.8 &   \textbf{78.7} & 53.2 & 72.0 & 45.3&61.0 \\
				OC-SVM-Alexnet    & 59.4 & 54.0 &58.8 & 57.5 & 75.3 & 55.8 &   69.2 & 54.7 & 63.0 & 53.0&60.1 \\
				IF                & 63.0 & 37.9 &63.0 & 40.8 & \textbf{76.4} & 51.4 &   66.6 & 48.0 & 65.1 & 45.9&55.8 \\
				GMM               & 70.9 & 44.3 &69.7 & 44.5 & \textbf{76.1} & 50.5 &   76.6 & 49.6 & 64.6 & 38.4&58.5 \\
				\hline
				AnoGAN       & 61.0  & 56.5  & 64.8  & 52.8  & 67.0  & 59.2  & 62.5  & 57.6  & 72.3  & 58.2  & 61.2  \\
				ADGAN & 63.2 & 52.9 & 58.0 & \textbf{60.6} & 60.7 & \textbf{65.9} & 61.1 & 63.0 & 74.4 & 64.4 & 62. \\

				VAE   & 77.1  & 46.7   & 68.4   & 53.8   & 71.    & 54.2   & 64.2   & 51.2   & \textbf{76.5}   & 46.7   & 61.0\\ 
				VAEGAN & 76.2 &    46.9 &    \textbf{69.7} &    52.0 &     {75.6} &    53.6 &    58.8 &    55.4 &    75.4 &    46.0 &     60.9 \\
				OC-D-SVDD & 61.7& 65.9& 50.8& 59.1& 60.9& 65.7& 67.7& 67.3& 75.9& 73.1 & 63.2\\
				
				\hline
				
				MDN-2 & 76.1 &  46.9 &  68.7 &  53.8 &  70.4 &  53.8 &  63.2 &  52.3 &  \textbf{76.8} & 46.7  & 60.9 \\
				MDN-4 & 76.9 &  46.8 &  68.6 &  53.5 &  69.3 &  54.4 &  63.5 &  54.1 &  76.  & 46.9  & 61.0 \\
				MDN-8 & 76.2 &  46.9 &  68.6 &  53.3 &  70.4 &  54.7 &  63.3 &  53.  &  76.3 & 47.3  & 61. \\
				MDN-16 & 76.2 &  47.9 &  68.2 &  52.8 &  70.1 &  54.  &  63.5 &  52.9 &  76.4 & 46.9  & 60.9 \\
				\hline
				MHP-WTA-2 & 77.3 &  51.6 &  68.  &  55.2 &  69.5 &  54.3 &  64.3 &  55.5 &  76.  & 51.2  & 62.2 \\
				MHP-WTA-4 & \textbf{77.8} &  53.9 &  65.1 &  56.7 &  66.  &  54.2 &  63.5 &  56.3 &  75.2 & 54.1  & 62.2 \\
				MHP-WTA-8 & 76.1 &  56.  &  62.7 &  58.8 &  62.6 &  55.3 &  61.4 &  57.8 &  74.3 &  54.8 & 61.9 \\
				MHP-WTA-16 & 75.7 &  56.7 &  60.9 &  59.8 &  62.7 &  56.  &  61.  &  56.8 &  73.8 & 57.3  & 62. \\
				\hline
				MHP-2 & 75.5 &  49.9 &  67.6 &  54.6 &  69.3 &  54.3 &  63.6 &  57.7 &  76.4 & 50.8  & 61.9 \\
				MHP-4 & 75.2 &  51.  &  66.  &  56.8 &  67.7 &  55.1 &  64.4 &  56.  &  76.4 & 51.   & 61.9 \\
				MHP-8 & 75.7 &  54.  &  65.2 &  57.6 &  64.8 &  55.4 &  62.5 &  54.7 &  75.9 & 53.   & 61.8 \\
				MHP-16 & 75.8 &  53.9 &  64.1 &  58.5 &  64.6 &  55.2 &  62.3 &  54.5 &  75.9 & 53.2  & 61.7 \\
				\hline
				MDN+GAN-2 & 74.6 &  48.9 &  68.6 &  52.1 &  71.1 &  52.5 &  66.8 &  57.7 &  76.5 & 48.1  & 61.6 \\
				MDN+GAN-4 & 76.2 &  50.4 &  69.  &  52.4 &  71.6 &  53.2 &  65.9 &  58.3 &  75.3 & 48.9  & 62.1 \\
				MDN+GAN-8 & 77.4 &  48.3 &  69.3 &  53.1 &  72.2 &  53.7 &  67.9 &  54.  &  76.  & 51.9  & 62.3 \\
				MDN+GAN-16 & 73.6 &  46.9 &  69.4 &  52.2 &  75.3 &  54.1 &  65.7 &  56.8 &  75.3 & 45.4  & 61.4 \\
				\hline

				\hline
				
				ConAD - 2 (ours) & 77.3 &    60.0 &  66.6 &    56.2 &    69.4 &    56.1 &    70.6 &    63.0 &     74.8 &    49.9 &    64.3 \\
				ConAD - 4 (ours) & \textbf{77.6} &    52.5 &    66.3 &    57.0 &     68.7 &    54.1 &    \textbf{80.1} &    54.8 &    74.1 &    53.9 &    {63.9} \\
				ConAD - 8 (ours) & 77.4 &    \textbf{65.2} &    64.8 &    60.1 &    67.0 &     {57.9} &     72.5 &    \textbf{66.2} &    74.8 &    \textbf{66.0} &     \textbf{67.1} \\
				ConAD - 16 (ours)& 77.2& \textbf{63.1}& 63.1& \textbf{61.5}& 63.3& \textbf{58.8}& 69.1& \textbf{64.0}& 75.5& \textbf{63.7}& \textbf{65.9}\\
				
			\end{tabular} 
		\end{sc}
	\end{small}

	\label{Tab:CIFAR-10-all}
\end{table*}
\section{Metal anomaly results}

\begin{table}[h]
	\vskip 0.15in
	
	\centering
	\caption{Anomaly detection performance on Metal Anomaly dataset. Here the anomaly detection is measured by summing up reconstructions errors over all pixel positions. This consideration is rather sensitive to noise in very high-dimensional input space such as in Metal Anomaly. The best two models are marked.}
	\vskip 0.15in
	
	\begin{small}
		\begin{sc} 
			\begin{tabular}{ccccc}
				\toprule
				\midrule
				& \multicolumn{4}{c}{Hypotheses}\\
				\cmidrule{2-5}
				Model & 1 & 2 & 4 & 8  \\ 
				\midrule
				
				MHP & \multirow{3}{*}{79.5=VAE}& \textbf{87.6}  & \textbf{83.4}  & 79.3  \\
				MHP+WTA &  & 85.1  & \textbf{87.8} & 80.0  \\
				MDN & & 74.6  & 76.5  & 74.3  \\
				\cmidrule{2-5}
				MDN+GAN & \multirow{2}{*}{78.2 =VAEGAN} & 81.0 & 78.1  &  \textbf{81.0}  \\
				ConAD & & \textbf{86.7}& 81.2&  \textbf{81.7}\\
				
				\midrule
				\bottomrule
			\end{tabular}
		\end{sc}
	\end{small}
	\label{tab:100 percent}
\end{table}

\begin{table}[h]
	\vskip 0.15in
	\centering
	\caption{Anomaly detection performance on Metal Anomaly dataset by summing over the 1\% most-anomalous pixels for each input image. The  best two models are marked.}
	\vskip 0.15in
	
	\begin{small}
		\begin{sc} 
			\begin{tabular}{ccccc}
				\toprule
				\midrule
				& \multicolumn{4}{c}{Hypotheses}\\
				\cmidrule{2-5}
				Model & 1 & 2 & 4 & 8  \\ 
				\midrule
				MHP & \multirow{3}{*}{97.7=VAE}& \textbf{99.3}  & \textbf{99.0}  & \textbf{98.4}  \\
				MHP+WTA &  & 99.0  & \textbf{99.0} & 98.1  \\
				MDN & & 97.0  & 96.0  & 97.5  \\
				\cmidrule{2-5}
				MDN+GAN & \multirow{2}{*}{97.8 =VAEGAN} & 96.6 & 95.1  &  97.8  \\
				ConAD & & \textbf{99.2}& \textbf{99.0}&  \textbf{98.7}\\
				\midrule
				\bottomrule
			\end{tabular}
		\end{sc}
	\end{small}
	\label{tab:1percent}
	
\end{table}            

\end{document}